\newtheorem{theorem}{Theorem}[section]
\newtheorem{corollary}{Corollary}[section]
\newtheorem{definition}{Definition}[section]
\theoremstyle{remark}
\newtheorem{remark}{Remark}[section]
\newcommand{\mydeg}{\mathrm{deg}}
\newcommand{\reals}{{\rm I\!R}}
\newcommand{\conv}{\mathrm{conv}}
\newcommand{\Newt}{\mathrm{Newt}}
\newcommand{\ENewt}{\mathrm{ENewt}}
\begin{document}
\title{Tropical Polynomial Division and Neural Networks}
\author{Georgios Smyrnis}
\author{Petros Maragos}
\affil{School of ECE, National Technical University of Athens, Athens, Greece}
\affil[]{\texttt{geosmirnis@gmail.com}, \texttt{maragos@cs.ntua.gr}}
\date{}
\maketitle

\begin{abstract}
In this work, we examine the process of Tropical Polynomial Division, a geometric method which seeks to emulate the division of regular polynomials, when applied to those of the max-plus semiring. This is done via the approximation of the Newton Polytope of the dividend polynomial by that of the divisor. This process is afterwards generalized and applied in the context of neural networks with ReLU activations. In particular, we make use of the intuition it provides, in order to minimize a two-layer fully connected network, trained for a binary classification problem. This method is later evaluated on a variety of experiments, demonstrating its capability to approximate a network, with minimal loss in performance.
\end{abstract}

\section{Introduction}\label{sec:intro}

Minimax algebra \cite{Cuni79} and tropical geometry \cite{MaSt15} are fields of mathematics with applications in a variety of domains, such as the analysis of dynamic systems \cite{Butk10}, \cite{BCOQ01}, \cite{Mara17}, optimization \cite{CGQ04}, \cite{AGG12}, idempotent functional analysis \cite{LMS01} and morphological methods for computer vision \cite{Mara05b}, \cite{Serr82}. Recent works \cite{ZNL18}, \cite{ChMa18} have expanded the links of this branch of mathematics in the domain of neural networks with piecewise linear activations, demonstrating a profound connection between the two.

It is apparent that further study is needed, given these new insights, in the role that this particular type of algebra plays, in the context of neural networks, in order to better identify the inner workings of the latter. Such an accomplishment would potentially have applications in the problem of network minimization, given that, as demonstrated by the results of \cite{SongWeightsConnections}, \cite{Luo_2017_ICCV}, pruning a network can lead to considerable improvements in both network size and runtime, without significant loss of accuracy. 

In this work, we seek to expand the link between these fields, by examining the process of Tropical Polynomial Division. The problem of factoring tropical polynomials is already studied in the one dimensional case \cite{TropMathSpeyerSturmfels}, \cite{CuMe80}, but the process presented here is novel, in that it attempts to emulate the division of regular polynomials. This is done by approximating the Newton Polytope of the dividend, which is characteristic of a tropical polynomial as a function \cite{ChMa17-}. Insight gained by this method is also applied in minimizing a two-layer neural network, trained for a binary classification problem, with preliminary experiments demonstrating its validity.

The rest of this work is structured as follows. In Section \ref{sec:trop}, we shall perform a review of certain key elements of tropical algebra, used in the rest of this work. In Section \ref{sec:div} we shall analyze our algorithm for tropical polynomial division, while in Section \ref{sec:appl} we shall examine how the intuition it provides can be applied in the case of neural network minimization. Finally, in Section \ref{sec:exper} we shall present certain experiments performed in this problem, and discuss their results.

\section{Related Work}\label{sec:related}

\subsection{Tropical Geometry}

The field of tropical algebra, as defined in \cite{MaSt15}, \cite{TropMathSpeyerSturmfels},  corresponds with the study of the $(\reals \cup \lbrace \infty \rbrace, \min,+)$ semiring, which is also referred to as min-plus algebra. The term is also used to refer to the dual version of this algebra \cite{ChMa18}, that is, max-plus algebra, where the semiring studied is instead $(\reals \cup \lbrace -\infty \rbrace, \max,+)$, and in this work the latter meaning will be assumed. Both types of algebras have been analyzed in the context of dynamic systems \cite{Butk10}, \cite{Mara17}, while in \cite{MaSt15} relations with algebraic geometry are explored, where ideas from tropical geometry are used in the study of varieties of Laurent polynomials. Moreover, systems of linear equations in this type of algebra have also been studied in \cite{Butk10} and \cite{TsMa19}.

It is also possible to define tropical polynomials, that is, multivariate polynomials that have coefficients belonging to the above semiring \cite{TropMathSpeyerSturmfels}, which support the same operations. These polynomials are piecewise linear functions, with their roots being the set of points where multiple linear regions correspond to the same maximum value. The analysis of this set of roots gives rise to the notion of tropical varieties and ideals, which are further examined in \cite{maclagan_rincon_2018}.

Recently, there has been work in linking max-plus algebra with the field of machine learning and neural networks. In particular, \cite{ZNL18} and \cite{ChMa18} use this type of algebra for the analysis of neural networks with ReLU activations as piecewise linear functions, and arrive at similar conclusions as \cite{NIPS2014_5422}, regarding the number of linear regions found in the network, with \cite{ChMa18} also providing an algorithm to sample these linear regions. Moreover, \cite{ChMa17-} and \cite{Zha+19} have demonstrated the use of morphological perceptrons, which rely on max-plus operations instead of normal inner products with the weights of the neuron, demonstrating particular ease when pruning neurons from the network, while also linking them with maxout networks \cite{GWM+13}.

\subsection{Posynomials and Geometric Programming}

Geometric programming involves the minimization of posynomials (Laurent polynomials restricted to positive values of the variables, and positive coefficients), under similar posynomial constraints. This type of problem applies to log-log convex functions, whose logarithm is convex in the logarithm of the variables \cite{Boyd_GeomProg_07}. This type of problem is therefore easily solvable, in the context of convex optimization.

Moreover, recent work has further analyzed the link between posynomials and a particular type of neural networks, called Log-Sum-Exp networks \cite{Calafiore2018LogsumexpNN}, \cite{Calafiore2019}. The latter have exponential activations in the hidden layer and logarithmic in the output neuron. The difference of two such networks is also proven to be a universal approximator, and can be used to model non-convex data, as a difference of convex functions.

\subsection{Neural Network Pruning}

The act of pruning neural networks attempts to approximate a given network's result, with one having a smaller number of neurons, by systematically removing parts of the network in order to decrease its size. This method has been long known and studied \cite{BackpropPruning}, in the context of back-propagation trained, fully connected neural networks. More recently, further work has been made in applying pruning techniques in convolutional layers of modern architectures, either by selecting the filter with the least important output \cite{Luo_2017_ICCV}, or by stochastically removing connections between neurons, and subsequently neurons themselves, from the network \cite{SongWeightsConnections}, both with remarkable results. The success of these methods suggests that the problems solved by many neural networks require much less computational effort, in order to perform equally well to the given task.

\section{Basic Elements of Tropical Algebra}\label{sec:trop}

In this section, we shall perform a review of the basic elements of Tropical Algebra, as demonstrated in previous works on the field. This review shall become the theoretical foundation for the algorithm we shall discuss in the following.

As we have already described, we shall think of tropical algebra in its max-plus version. In this type of algebra, it is possible to define polynomials, similar to those in normal algebra. In particular, any function of the form:

\begin{equation}
	p(\bm{x}) = \max\limits_{i=1}^{k} \left(\bm{a_i}^T \bm{x} + b_i\right) = \max\limits_{i=1}^{k} \left(\sum_{j=1}^{d} a_{ij} x_j + b_i\right), ~ \forall \bm{x} \in \reals^d
\end{equation}
is a \textit{tropical (multivariate) polynomial}. This function is piecewise linear, due to being equal to one of its linear terms, in the entirety of its domain.

Similar to regular polynomials, it is possible to define roots for tropical polynomials. In particular:

\begin{definition}
	The roots of a tropical polynomial $p(\bm{x})$ of the above form are defined as the points $\bm{x} \in \reals^d$, where there are two linear terms of the polynomial, corresponding to indices $i_1, i_2, i_1\neq i_2$, such that:
	\begin{gather}
		\bm{a_{i_1}}^T \bm{x} + b_{i_1} = \bm{a_{i_2}}^T \bm{x} + b_{i_2} \nonumber \\
		\bm{a_{i_1}}^T \bm{x} + b_{i_1} \geq \bm{a_{i}}^T \bm{x} + b_{i}, \forall i = 1 , \dots , k
	\end{gather}
\end{definition}

These roots correspond exactly to the points where the function represented by the tropical polynomial is not differentiable (since there is a sharp change in the gradient of the function, due to moving from one linear term to the other). Similar to regular polynomials, the set of all roots of a given tropical polynomial defines a \textit{tropical polynomial curve}. As an example, the roots of a tropical polynomial of degree 1, in two dimensions:
\begin{equation}
	p(x,y) = \max (x+b_1, y+b_2, b_3)
\end{equation}
form a \textit{tropical line} in two dimensions. The latter consists of three half-rays, starting from the point $(b_3-b_2, b_3-b_1)$, and emanating at $45^\circ, 180^\circ$ and $270^\circ$, respectively \cite{ChMa18}. This can be seen in Figure \ref{fig:trop_line}.

\begin{figure}[h]
	\centering
	\includegraphics[scale=0.7]{./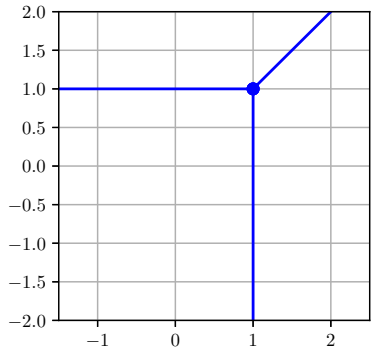}
	\caption{Tropical curve of $\max(x,y,1)$.}
	\label{fig:trop_line}
\end{figure}

An element intrinsic to the study of polytopes (and which will be central to what follows) is that of the \textit{Newton Polytope} of a tropical polynomial.

\begin{definition}[Newton Polytope \cite{ChMa17-}]
	Given a tropical polynomial $p(\bm{x})$, the associated \textit{Newton Polytope} is the set:
	\begin{equation}
	\Newt(p) = \conv \left\lbrace \bm{a}_i : i = 1 , 2 , \dots k \right\rbrace
	\end{equation}
	where $\conv S$ denotes the convex hull of the set $S$.
	
\end{definition}

\begin{definition}[Extended Newton Polytope \cite{ChMa17-}]
	Given the same tropical polynomial as above, the associated \textit{Extended Newton Polytope} is the set:
	\begin{equation}
	\ENewt(p) = \conv \left\lbrace \left( \bm{a}_i ,  b_i\right) : i = 1 , 2 , \dots k \right\rbrace
	\end{equation}
	
\end{definition}

Examples of both of these polytopes, for tropical polynomials in one and two dimensions, can be seen in Figures \ref{fig:newt_1d}, \ref{fig:newt_2d}. These polytopes have the following two properties, which are linked to the behavior of the tropical polynomials from which they are derived.

\begin{itemize}
	\item As noted in \cite{ZNL18}, \cite{ChMa18}, the tropical curve of a polynomial is directly linked to the above polytopes. In particular, the tropical curve is the dual version of the graph obtained when projecting the $\ENewt(p)$ in the first $d$ dimensions. Each linear region in $\ENewt(p)$ corresponds to a vertex in the tropical curve of the polynomial.
	
	\item It is well-known \cite{ZNL18}, \cite{ChMa18}, \cite{ChMa17-} that a tropical polynomial is defined as a function only by the terms corresponding to vertices on the upper faces of its Extended Newton Polytope. Indeed, terms which lie anywhere else on the polytope (in the interior, or at any other point of its faces) are a convex combination of points corresponding to these vertices, thus for any given $\bm{x} \in \reals^d$, terms corresponding to vertices will be larger.
\end{itemize}

\begin{figure}[h]
	\centering
	\captionsetup{justification=centering,margin=1cm}
	\begin{subfigure}[t]{0.5\textwidth}
		\centering
		\includegraphics[width=\textwidth]{./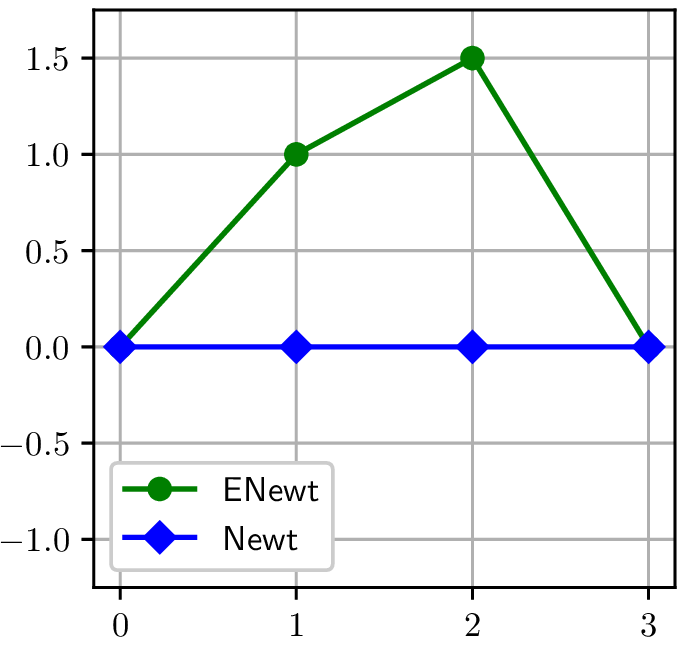}
		\caption{}
		\label{fig:newt_1d}
	\end{subfigure}%
	\begin{subfigure}[t]{0.5\textwidth}
		\centering
		\includegraphics[width=\textwidth]{./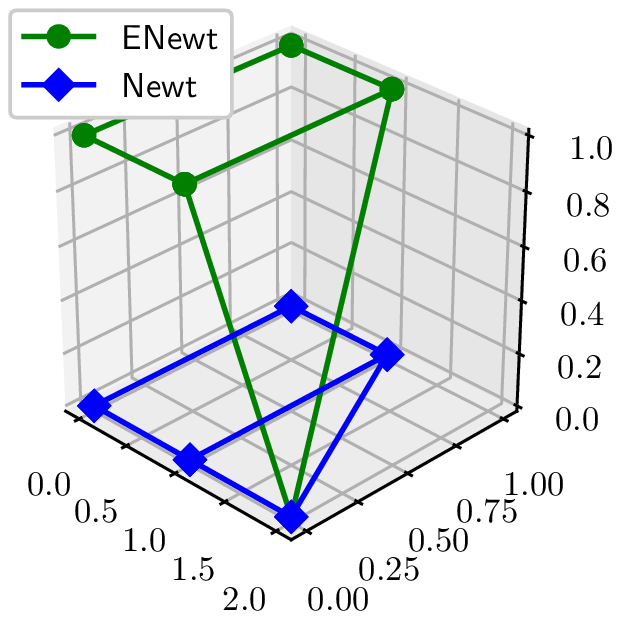}
		\caption{}
		\label{fig:newt_2d}
	\end{subfigure}
	\caption{(Extended) Newton Polytopes of (a) $\max(3x,2x+1.5,x+1,0)$ (b) $\max(2x,x+y+1,y+1,x+1,1)$.}
	\label{fig:newt}
\end{figure}

\section{Division of Tropical Polynomials}\label{sec:div}

\subsection{Tropical Polynomial Division Algorithm}

Let us now assume that we want to approximately divide a tropical polynomial $p(\bm{x}) = \max\limits_{i=1}^k \left( \bm{a}_i^T \bm{x} + b_i \right),~ \bm{x} \in \rm \reals^d$, by another tropical polynomial $d(\bm{x}) = \max\limits_{i=1}^k \left( \tilde{\bm{a}_i}^T \bm{x} + \tilde{b_i} \right)$. The algorithm proposed for this goal is an approximation, given that it outputs two tropical polynomials $q(\bm{x}) , ~ r(\bm{x})$, for which the following inequality holds:
\begin{equation}\label{eqn:ineq_1}
	p(\bm{x}) \geq \max \left( q(\bm{x}) + d(\bm{x}) , r(\bm{x}) \right) , ~ \forall \bm{x} \in \rm \reals^d
\end{equation}
The above two polynomials are maximal, in the sense that for any other two tropical polynomials $\tilde{q}(\bm{x}) , ~ \tilde{r}(\bm{x})$, with terms of the same degree, satisfying (\ref{eqn:ineq_1}), the following holds:
\begin{equation}\label{eqn:ineq_2}
	\max \left( q(\bm{x}) + d(\bm{x}) , r(\bm{x}) \right) \geq \max \left( \tilde{q}(\bm{x}) + d(\bm{x}) , \tilde{r}(\bm{x}) \right) , ~ \forall \bm{x} \in \reals^d 
\end{equation}

In what follows, we shall assume that every tropical polynomial contains a term for every point in the interior of its Newton Polytope (given the second property of Newton Polytopes described in the previous section, this does not affect the function corresponding to the tropical polynomial). Furthermore, we shall also assume that this term corresponds to a point that lies exactly on the upper hull of the Extended Newton Polytope (so as to be maximal, without altering the function).



\textbf{Tropical Polynomial Division Algorithm}.	Let $p(\bm{x}) , ~ d(\bm{x})$ be two tropical polynomials. The algorithm to divide $p(\bm{x})$ by $d(\bm{x})$ is as follows:
	\begin{itemize}
		\item Let $C$ be the set of slopes with which we can shift $\Newt(d)$, so that it is contained in the interior of $\Newt(p)$. More formally:
		\begin{equation}
			C = \left\lbrace \bm{c} \in \mathbb{Z}^d : \Newt\left( \bm{c}^T \bm{x} + d(\bm{x}) \right) \subseteq \Newt(p) \right\rbrace
		\end{equation}
		
		\item For every element $\bm{c} \in C$, define $q_c \in \reals$ as the largest value for which $\ENewt(p)$ is above $\ENewt\left(q + \bm{c}^T \bm{x} + d(\bm{x}) \right)$. This is equivalent to:
		\begin{equation}
			q_c = \max \left\lbrace q \in \reals : p(\bm{x}) \geq q + \bm{c}^T \bm{x} + d(\bm{x}) , ~ \forall \bm{x} \in \reals^d \right\rbrace
		\end{equation}
		
		\item Output the tropical polynomial:
		\begin{equation}
			q(\bm{x}) = \max\limits_{c \in C} \left( q_c + \bm{c}^T \bm{x} \right)
		\end{equation}
		along with the tropical polynomial:
		\begin{equation}
			r(\bm{x}) = \max\limits_{t(\bm{x}) \in T} \left( t(\bm{x})  \right)
		\end{equation}
		where $T$ is the set of terms $\bm{a}_j^T \bm{x} + b_j$ of $p(\bm{x})$ for which there is no value $\bm{c}$ such that $\bm{a}_j \in \Newt\left( \bm{c}^T \bm{x} + d(\bm{x}) \right)$.
	\end{itemize}	
As is the case of classic polynomial division, we shall refer to $q(\bm{x})$ as the \textit{quotient}, and $r(\bm{x})$ as the \textit{remainder} of the division of $p(\bm{x})$ by $d(\bm{x})$.

\theoremstyle{remark}
\begin{remark}
	The above algorithm is equivalent to the morphological opening of the upper hull of $\ENewt(p)$, which can be regarded as a function $n_p: \mathbb{Z}^d \to \reals$, by the upper hull of $\ENewt(d)$ (similarly $n_d: \mathbb{Z}^d \to \reals$). Indeed, the operation of morphological opening (\cite{RoHe91}, \cite{Mara98}) is the composition of a morphological erosion,
	\begin{equation}
		\epsilon(\bm{x}) = \inf\limits_{c \in \mathbb{Z}^d} \lbrace n_p (\bm{x}+c) - n_d(c)\rbrace
	\end{equation}
	where $C$ is the same set of valid shifts of the divisor, with a morphological dilation:
	\begin{equation}
		\delta\epsilon(\bm{x}) = \sup\limits_{c \in \mathbb{Z}^d} \lbrace \epsilon (\bm{x} - c) + n_d(c) \rbrace
	\end{equation}
	The erosion eliminates all points in $\Newt(p)$ which shifts in the divisor cannot match (setting their value to $-\infty$), while also lowering the vertices of the polygon. The dilation which follows restores these vertices, albeit in a way such that the polytope of the result can fit as high as possible, due to the infimum during the erosion leading to perfect reconstruction being impossible. The end result will be equivalent to the graph produced by fitting the function $n_d(\bm{x})$ as closely as possible to $n_p(\bm{x})$, which is the same operation as that of our algorithm. 

	Due to this equivalence, in contrast to the way division for regular polynomials works, it is not necessary to iterate over the elements of $C$ in any particular order, or even sequentially. Indeed, there is no cancellation of terms in the resulting polynomial, hence the division as described cannot have different results, based on the order in which elements of $C$ are considered (while this may result in examining terms which may prove to be unnecessary, they will have no effect on the function corresponding to the polynomial). Thus, the above algorithm can be parallelized in a straightforward fashion.
	
\end{remark}

As previously mentioned, the tropical polynomials produced by the above algorithm have the following properties:
\begin{theorem}
	For any tropical polynomials $p(\bm{x}),~d(\bm{x})$, the quotient $q(\bm{x})$ and the remainder $r(\bm{x})$ produced by the algorithm satisfy (\ref{eqn:ineq_1}).	
\end{theorem}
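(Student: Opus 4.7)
The plan is to decompose the claimed inequality $p(\bm{x}) \geq \max(q(\bm{x}) + d(\bm{x}), r(\bm{x}))$ into the two separate bounds $p(\bm{x}) \geq q(\bm{x}) + d(\bm{x})$ and $p(\bm{x}) \geq r(\bm{x})$, and verify each from the definitions given in the algorithm. Both are essentially direct consequences of how $q_c$ and the remainder set $T$ are constructed; no additional geometric content beyond the algorithm's defining properties is needed.

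For the first bound, I would use the fact that tropical (max-plus) multiplication distributes over the max: since $q(\bm{x}) = \max_{\bm{c} \in C}(q_c + \bm{c}^T \bm{x})$, adding $d(\bm{x})$ inside the max gives
\begin{equation}
q(\bm{x}) + d(\bm{x}) = \max_{\bm{c} \in C}\bigl(q_c + \bm{c}^T \bm{x} + d(\bm{x})\bigr).
\end{equation}
By the very definition of $q_c$ as the largest scalar with $p(\bm{x}) \geq q_c + \bm{c}^T \bm{x} + d(\bm{x})$ for all $\bm{x} \in \mathbb{R}^d$, each term inside the outer max is dominated by $p(\bm{x})$, and hence so is the max itself. (A minor point to note is that $C$ is nonempty provided $\Newt(d)$ can be shifted to lie inside $\Newt(p)$; if $C = \emptyset$ one sets $q(\bm{x}) = -\infty$ by convention, making the inequality vacuous.)

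For the second bound, observe that $r(\bm{x})$ is by construction a max over a subset $T$ of the original linear terms of $p(\bm{x})$, i.e.\ terms of the form $\bm{a}_j^T \bm{x} + b_j$ that are already summands of $p$. Since $p(\bm{x}) = \max_{i=1}^k(\bm{a}_i^T \bm{x} + b_i)$ dominates each individual term, and in particular each $t(\bm{x}) \in T$, taking the max over $T$ preserves the inequality, giving $p(\bm{x}) \geq r(\bm{x})$.

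Combining the two bounds yields $p(\bm{x}) \geq \max(q(\bm{x}) + d(\bm{x}), r(\bm{x}))$, which is exactly (\ref{eqn:ineq_1}). I do not anticipate a substantive obstacle: the only subtle point is checking that the distributive identity $\max_{\bm{c} \in C}(q_c + \bm{c}^T\bm{x}) + d(\bm{x}) = \max_{\bm{c} \in C}(q_c + \bm{c}^T\bm{x} + d(\bm{x}))$ is applied correctly, and that the edge case $C = \emptyset$ (where $q$ is the tropical zero $-\infty$) is handled by convention. The maximality claim (\ref{eqn:ineq_2}) is a separate statement and is not part of the present theorem.
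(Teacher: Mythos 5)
Your proposal is correct and follows essentially the same route as the paper: decompose (\ref{eqn:ineq_1}) into $p \geq q + d$ and $p \geq r$, use distributivity of $+$ over $\max$ for the first bound, and the fact that $r$ is a max over a subset of $p$'s own terms for the second. The only (immaterial) difference is that you invoke the functional definition of $q_c$ directly, where the paper phrases the same fact geometrically via the upper hull of $\ENewt(q+d)$ lying below that of $\ENewt(p)$; your handling of the $C = \emptyset$ edge case is a small bonus the paper omits.
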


\begin{proof}
	For every term of the form $q_c + \bm{c}^T \bm{x} + d(\bm{x})$ produced by the algorithm, its Extended Newton Polytope is constructed so that its upper hull lies directly below that of $\ENewt(p)$. Furthermore:
	\begin{align}
		\max\limits_{c \in C} \left\lbrace q_c + \bm{c}^T \bm{x} + d(\bm{x}) \right\rbrace &= \max\limits_{c \in C} \left\lbrace q_c + \bm{c}^T \bm{x} \right\rbrace + d(\bm{x}) \nonumber \\
		&= q(\bm{x}) + d(\bm{x})
	\end{align}
	Given that the Extended Newton Polytope of the maximum of several terms is equivalent to the convex hull of the union of the separate polytopes, this means that the upper hull of $\ENewt\left( q(\bm{x}) + d(\bm{x}) \right)$ lies strictly below that of $\ENewt(p)$. This implies that:
	\begin{equation}\label{eq:proof_part1}
		\max \left( q(\bm{x}) + d(\bm{x}) , p(\bm{x}) \right) = p(\bm{x}), ~ \forall \bm{x} \in \rm \reals^d
	\end{equation}
	Furthermore, since the remainder $r(\bm{x})$ consists only of terms which already appear in $p(\bm{x})$, it is evident that:
	\begin{equation}\label{eq:proof_part2}
		\max \left( r(\bm{x}) , p(\bm{x}) \right) = p(\bm{x}), ~ \forall \bm{x} \in \rm \reals^d
	\end{equation}
	The required result immediately follows from \ref{eq:proof_part1} and \ref{eq:proof_part2}.
\end{proof}

\begin{theorem}
	The tropical polynomials $q(\bm{x}),~r(\bm{x})$ are maximal, in the sense that for any other two tropical polynomials $\tilde{q}(\bm{x}) , ~ \tilde{r}(\bm{x})$, for which (\ref{eqn:ineq_1}) holds, containing terms of the same degree as $q(\bm{x})$ and $r(\bm{x})$ respectively, the following holds:
	\begin{equation}
		\max \left( q(\bm{x}) + d(\bm{x}) , r(\bm{x}) \right) \geq \max \left( \tilde{q}(\bm{x}) + d(\bm{x}) , \tilde{r}(\bm{x}) \right) , ~ \forall \bm{x} \in \reals^d
	\end{equation}
\end{theorem}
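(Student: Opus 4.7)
The plan is to reduce the maximality statement to two separate pointwise inequalities, $\tilde{q}(\bm{x}) \leq q(\bm{x})$ and $\tilde{r}(\bm{x}) \leq r(\bm{x})$, and then recover the desired inequality via monotonicity of $\max$ and of pointwise addition by $d(\bm{x})$. I will read the hypothesis that $\tilde{q},\tilde{r}$ contain terms "of the same degree" as $q,r$ as saying that the slope vectors of $\tilde{q}$ lie in the shift set $C$ built by the algorithm, and that the slopes of $\tilde{r}$ lie in the unmatched-slope set $T$ used to build $r$; without some such reading, the term-by-term comparison cannot even be set up.

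For the quotient piece, I would fix an arbitrary term $\tilde{q}_c + \bm{c}^T \bm{x}$ of $\tilde{q}$ with $\bm{c} \in C$. Combining (\ref{eqn:ineq_1}) with the trivial bound $\tilde{q}_c + \bm{c}^T\bm{x} \leq \tilde{q}(\bm{x})$ gives $\tilde{q}_c + \bm{c}^T\bm{x} + d(\bm{x}) \leq p(\bm{x})$ for every $\bm{x} \in \reals^d$. But $q_c$ was defined in the algorithm precisely as the largest scalar satisfying this inequality, so $\tilde{q}_c \leq q_c$. Taking the maximum over all $\bm{c} \in C$ yields $\tilde{q}(\bm{x}) \leq q(\bm{x})$, and adding $d(\bm{x})$ to both sides preserves the inequality.

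For the remainder piece, any term of $\tilde{r}$ has the form $\tilde{b}_j + \bm{a}_j^T \bm{x}$ where $\bm{a}_j$ is the slope of a corresponding term $b_j + \bm{a}_j^T \bm{x}$ of $r$, and therefore of $p$. Under the standing normalization adopted at the start of Section \ref{sec:div}, this term sits on the upper hull of $\ENewt(p)$, so there is a nonempty open region $U_j \subseteq \reals^d$ on which $p(\bm{x}) = b_j + \bm{a}_j^T \bm{x}$. On $U_j$, the chain $\tilde{b}_j + \bm{a}_j^T \bm{x} \leq \tilde{r}(\bm{x}) \leq p(\bm{x}) = b_j + \bm{a}_j^T \bm{x}$ collapses to $\tilde{b}_j \leq b_j$. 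Taking the max over the common slope set gives $\tilde{r}(\bm{x}) \leq r(\bm{x})$, and combining this with the quotient bound via monotonicity of $\max$ delivers the stated inequality.

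I expect the main obstacle to be the remainder piece rather than the quotient piece: the argument depends on exhibiting, for each chosen slope $\bm{a}_j$, an open region on which the corresponding term of $p$ actually attains the pointwise maximum, and this in turn relies on the upper-hull normalization of $\ENewt(p)$ introduced in Section \ref{sec:div}. If $\bm{a}_j$ corresponded to a point strictly below the upper hull, the collapse $\tilde{b}_j \leq b_j$ would fail and the maximality of $r$ would break. Nailing this step down carefully, together with fixing a precise meaning for "terms of the same degree" in both $\tilde{q}$ and $\tilde{r}$, is the only non-routine part of the argument.
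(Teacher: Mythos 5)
Your proposal is correct and follows essentially the same route as the paper: a term-by-term comparison giving $\tilde{q}_c \le q_c$ from the maximality in the definition of $q_c$, and $\tilde{b}_j \le b_j$ for the remainder terms, combined via monotonicity of $\max$ and of addition by $d(\bm{x})$. The paper's own proof is considerably terser (it dispatches the remainder with ``the same argument''), and your observation that the remainder bound requires each retained term of $p(\bm{x})$ to actually be attained somewhere --- guaranteed by the upper-hull normalization of Section \ref{sec:div} --- is a detail the paper glosses over; note only that attainment at a single point already forces $\tilde{b}_j \le b_j$, so the region need not be open.
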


\begin{proof}
	Since the two quotients and the two remainders contain terms of the same degrees, we can study each pair separately.
	\begin{itemize}
		\item Given that for each term of $q(\bm{x})$:
		\begin{equation}
			q_c = \max \left\lbrace q \in \reals : p(\bm{x}) \geq q + \bm{c}^T \bm{x} + d(\bm{x}) , ~ \forall \bm{x} \in \reals^d \right\rbrace
		\end{equation}
		it must be the case that $q_c \geq \tilde{q}_c , ~ \forall c \in C$, since both are lower than the divisor. Therefore:	
		\begin{equation}
			q(\bm{x}) = \max\limits_{c \in C} \left\lbrace q_c + \bm{c}^T \bm{x} \right\rbrace \geq \max\limits_{c \in C} \left\lbrace \tilde{q}_c + \bm{c}^T \bm{x}\right\rbrace = \tilde{q}(\bm{x}), \forall \bm{x} \in \reals^d
		\end{equation}
		
		\item Given that the remainder provided by the algorithm matches the divisor exactly, the same argument can be used to prove that $r(\bm{x}) \geq \tilde{r}(\bm{x})$.
	\end{itemize}
	The required property immediately follows.
\end{proof}

\subsection{Tropical Polynomial Division with Multiple Divisors}

As is the case with normal polynomial division, we can also extend the above algorithm in order to divide a tropical polynomial $p(\bm{x})$ by multiple tropical polynomials $d_i (\bm{x}) , i = 1 , \dots , n$. This means that we are now required to find $n$ quotients $q_i (\bm{x}) , i = 1 , \dots , n$, along with a remainder $r(\bm{x})$, so that:
\begin{equation}
	p(\bm{x}) \geq \max \left( \max\limits_{i=1}^n \left( q_i(\bm{x}) + d_i(\bm{x}) \right) , r(\bm{x})  \right) , ~ \forall \bm{x} \in \reals^d
\end{equation}

In contrast with the sequential approach of this problem when dealing with normal polynomials, the algorithm we propose for this goal is a direct extension of the one described in the previous section. In particular, the algorithm is as follows:

\textbf{Tropical Polynomial Division Algorithm - Multiple Divisors}	Let $p(\bm{x})$	be a tropical polynomial, along with a set of divisors:
\begin{equation}
	d_i (\bm{x}) = \max\limits_{j=1}^k \left( \tilde{\bm{a}}_{ij}^T \bm{x} + \tilde{b}_{ij} \right),~ \bm{x} \in \rm \reals^d , ~ i = 1 , \dots , n
\end{equation}
The algorithm to divide $p(\bm{x})$ by $d_i (\bm{x}) , i = 1 , \dots , n$ is as follows:
\begin{itemize}
	\item For each of the divisors $d_i (\bm{x})$, execute the single divisor algorithm, with corresponding outputs $q_i (\bm{x}) , r_i (\bm{x})$.
	
	\item The algorithm outputs as quotients the tropical polynomials $q_i (\bm{x})$, and as remainder the tropical polynomial $r(\bm{x})$ which consists of terms that appear in all remainders $r_i(\bm{x})$.
\end{itemize}

It is evident that, once again, the algorithm does not impose any particular order on how the divisors $d_i (\bm{x})$ are examined. This leads to the following property, of particular interest:
\begin{corollary}
	The remainder $r(\bm{x})$ produced by the algorithm is the same, regardless of the ordering of the divisors $d_i (\bm{x})$.
\end{corollary}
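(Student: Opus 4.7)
The plan is to observe that the multi-divisor algorithm, unlike classical polynomial division, never modifies $p(\bm{x})$ during execution: each single-divisor step is applied directly to the original dividend $p$, not to a running residue. Consequently, the output of the $i$-th single-divisor call depends only on the pair $(p, d_i)$ and is entirely independent of whether any other divisors have been processed before or after it. My proof would begin by stating and justifying this invariance explicitly, citing the algorithm description where $r_i(\bm{x})$ is defined as the polynomial whose terms are precisely the $\bm{a}_j^T \bm{x} + b_j$ of $p(\bm{x})$ for which no $\bm{c}$ satisfies $\bm{a}_j \in \Newt(\bm{c}^T \bm{x} + d_i(\bm{x}))$.

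Next, I would unfold the definition of the final remainder. By construction, the remainder returned by the multi-divisor algorithm consists of exactly those terms of $p(\bm{x})$ that appear in every $r_i(\bm{x})$. Writing $T_i$ for the (unordered) set of terms of $r_i(\bm{x})$, the remainder corresponds to the intersection $\bigcap_{i=1}^{n} T_i$. Since set intersection is commutative and associative, this intersection is invariant under any permutation of the indices $1, \ldots, n$, so the final remainder is too.

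Combining the two observations finishes the argument: each $T_i$ is order-independent by the first step, and the intersection $\bigcap_i T_i$ is order-independent by the second. Hence $r(\bm{x})$ is the same regardless of the ordering of the divisors $d_i(\bm{x})$.

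I do not foresee a substantial obstacle here; the only subtlety worth flagging is to make clear \emph{why} the multi-divisor scheme avoids the ordering dependence familiar from classical polynomial division. The reason, already highlighted in the remark following the single-divisor algorithm, is that the morphological opening viewpoint produces no cancellation of terms: each quotient is obtained independently by fitting $\ENewt(d_i)$ under $\ENewt(p)$, so the divisors do not interact through a sequentially updated dividend. Making this point explicit before invoking the set-intersection argument will keep the short proof self-contained.
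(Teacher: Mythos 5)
Your proposal is correct and follows exactly the reasoning the paper leaves implicit: each single-divisor division acts on the original dividend $p(\bm{x})$ independently of the others, and the final remainder is the intersection of the term sets $T_i$, which is invariant under permutation of the divisors. The paper offers essentially no more than the one-line observation that no ordering is imposed on the $d_i(\bm{x})$, so your write-up is a faithful (and somewhat more explicit) version of the same argument.
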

\noindent This is not the case for regular polynomial division, where the ordering of the divisors may affect the remainder (in particular, when the set of divisors is not a Groebner basis for its ideal \cite{Cox:2007:IVA:1204670}).

\subsection{Examples}

In order to illustrate the above algorithm, let us think of the following example in one dimension where:
\begin{equation}
	p(x) = \max \left(3x, 2x+1.5, x+1, 0\right) , ~ d(x) = \max \left(x+1, 0\right)
\end{equation}
The valid set of indices for the quotient, as described in the algorithm is $C = \left\lbrace 0, 1, 2 \right\rbrace$. Let us examine the case where $c = 1$. We can easily see that by raising the polytope of the polynomial:
\begin{equation}
	q_1 + x + \max \left(x+1, 0\right) = q_1 + \max(2x+1,x)
\end{equation}
the optimum can be found for $q_1 = 0.5$, for which the right vertex of the divisor coincides with the third vertex of the dividend. Similarly we can find that $q_0 = 0, q_2 = -1$. Thus we get:
\begin{equation}
	q(x) = \max \left( 2x-1, x+0.5, 0\right)
\end{equation}
We can verify that $p(x) = q(x) + d(x)$, as shown in Figure \ref{fig:div_1c}. However, in the case that:
\begin{equation}
p(x) = \max \left(3x, 2x+1.5, x+1, 0\right) , ~ d(x) = \max \left(x, 0\right)
\end{equation}
then for $c = 1$ we have:
\begin{equation}
q_1 + x + \max \left(x, 0\right) = q_1 + \max(2x,x)
\end{equation}
Here we get $q_1 = 1$, which, when combined with the results of the other degrees, leads to one of the vertices of the dividend being higher than those of the result, which causes the two being different, as seen in Figure \ref{fig:div_2c}.

\begin{figure}[h]
	\centering
	\begin{subfigure}[t]{0.3\textwidth}
		\centering
		\includegraphics[width=\textwidth]{./img/Figure_1.eps}
		\caption{}
		\label{fig:div_1a}
	\end{subfigure}%
	\begin{subfigure}[t]{0.3\textwidth}
		\centering
		\includegraphics[width=\textwidth]{./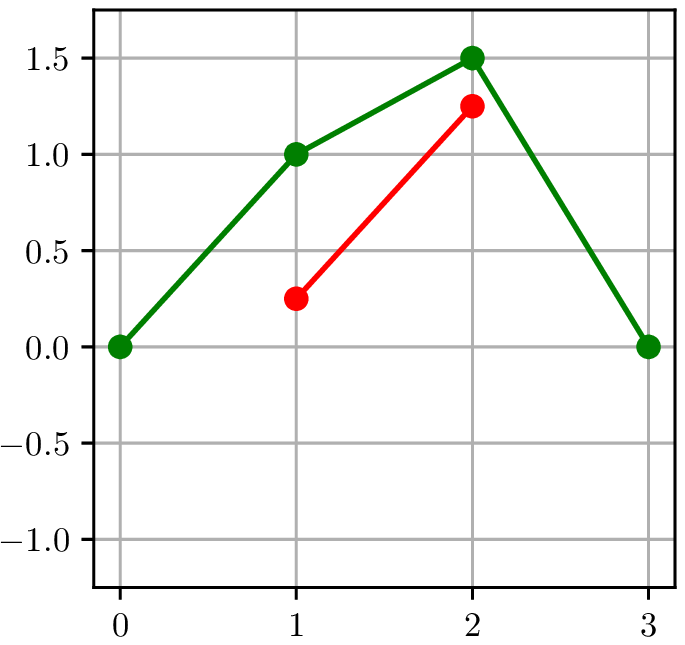}
		\caption{}
		\label{fig:div_1b}
	\end{subfigure}
	\begin{subfigure}[t]{0.3\textwidth}
		\centering
		\includegraphics[width=\textwidth]{./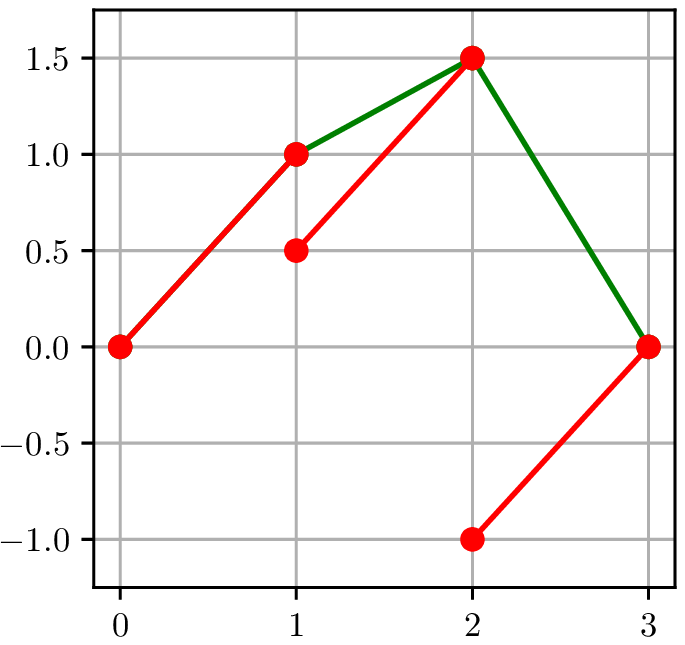}
		\caption{}
		\label{fig:div_1c}
	\end{subfigure}
	\caption{Division by $\max(x+1,0)$.}
	\label{fig:div_1}
\end{figure}

\begin{figure}[h]
	\centering
	\begin{subfigure}[t]{0.3\textwidth}
		\centering
		\includegraphics[width=\textwidth]{./img/Figure_1.eps}
		\caption{}
		\label{fig:div_2a}
	\end{subfigure}%
	\begin{subfigure}[t]{0.3\textwidth}
		\centering
		\includegraphics[width=\textwidth]{./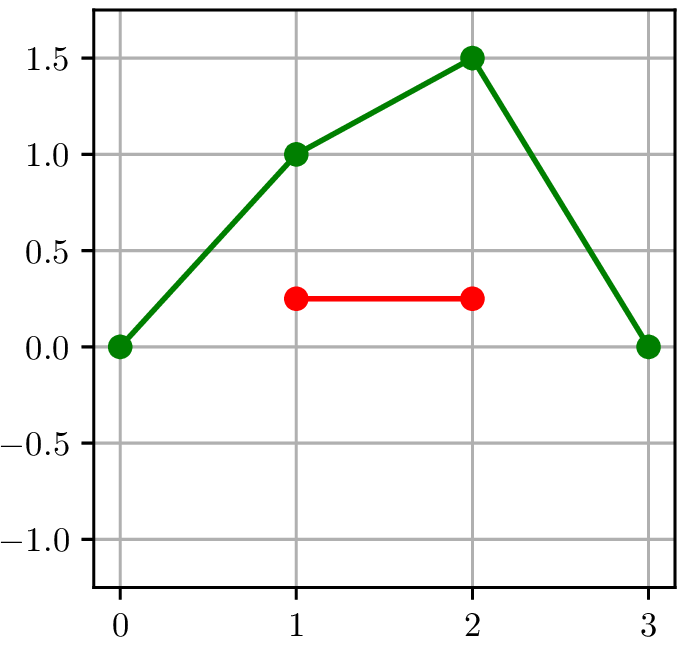}
		\caption{}
		\label{fig:div_2b}
	\end{subfigure}
	\begin{subfigure}[t]{0.3\textwidth}
		\centering
		\includegraphics[width=\textwidth]{./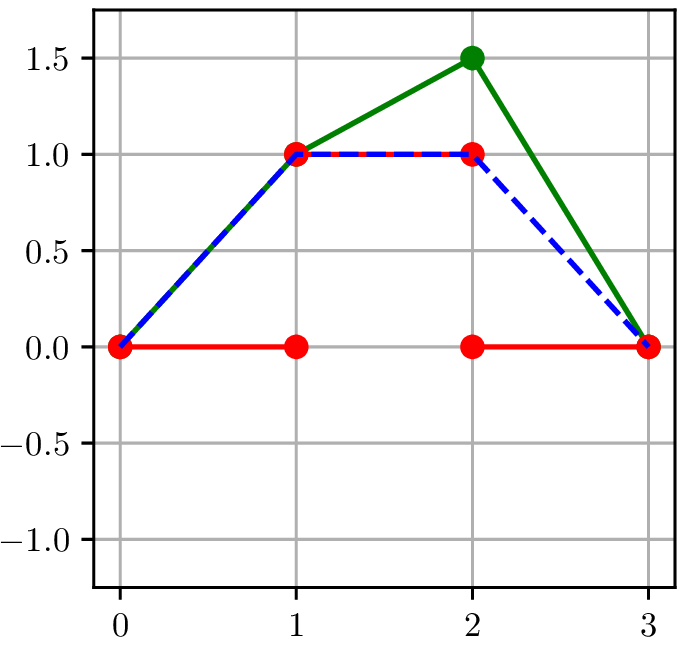}
		\caption{}
		\label{fig:div_2c}
	\end{subfigure}
	\caption{Division by $\max(x,0)$ (approximate result in blue).}
	\label{fig:div_2}
\end{figure}

As a final example, we can think of the case of a tropical polynomial in two dimensions. Let:
\begin{equation}
	p(x,y) = \max \left(2x, x+y+1, x+1, y+1, 1\right) , ~ d(x,y) = d(x) = \max(x,0)
\end{equation}
There are only three shifts available for $d(x)$, namely by adding $x, y, 0$ respectively, with corresponding vertical shifts $0, 1, 1$. Therefore, the quotient contains the following terms:
\begin{gather}
	q_{00} = 1+\max(x,0) = \max(x+1,1) \nonumber \\
	q_{01} = 1+y+\max(x,0) = \max(x+y+1,y+1) \nonumber \\
	q_{10} = x+\max(x,0) = \max(2x,x)
\end{gather}
In this case, the equality holds (as shown in Figure \ref{fig:div_3}), so there is no remainder.

\begin{figure}[h]
	\centering
	\begin{subfigure}[t]{0.5\textwidth}
		\centering
		\includegraphics[width=\textwidth]{./img/Figure_ENewt_3d.eps}
	\end{subfigure}%
	\begin{subfigure}[t]{0.5\textwidth}
		\centering
		\includegraphics[width=\textwidth]{./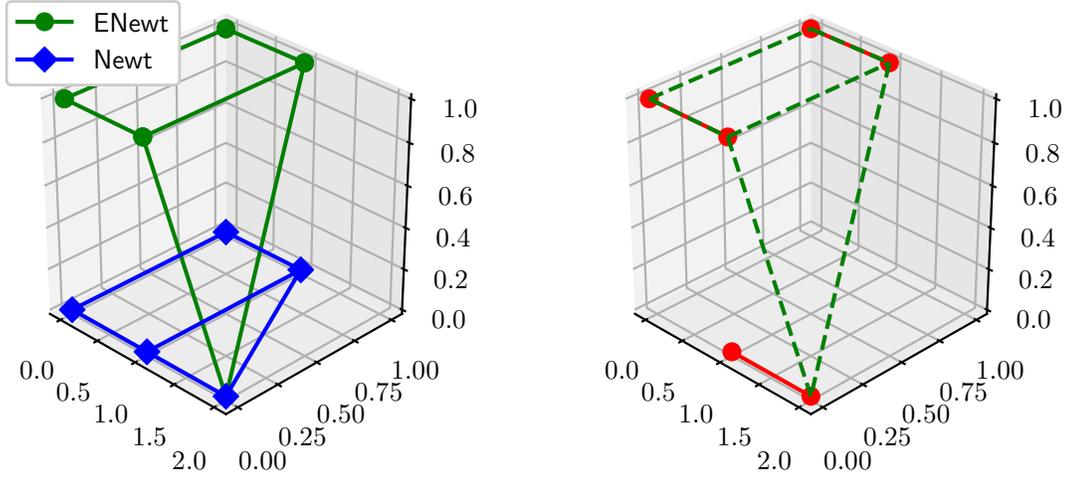}
	\end{subfigure}
	\caption{Tropical polynomial division in two dimensions.}
	\label{fig:div_3}
\end{figure}

\subsection{Formulation of the Algorithm via GGP}

In the case where the coefficients of the tropical polynomial $p(\bm{x})$ are positive (which can be done without loss of generality, since it is possible to add a large enough value to the dividend, and subtract it from the result, without the latter being affected), tropical polynomial division can also be formulated as a Generalized Geometric Programming (GGP) problem, as described in \cite{Boyd_GeomProg_07}. Indeed, the quotient of tropical polynomial division, as defined previously, can also be found by solving the following GGP:
\begin{gather*}
\mathrm{minimize} ~\sum_{c \in C} \left\lbrace l_c^{-1} \right\rbrace \\
\begin{matrix}
s.t.  & l_c q_c^{-1} \leq 1 , ~ \forall c \in C\\
~ & (q \oplus d)_j p_j^{-1}\leq 1 , ~ \forall j \in \mydeg(p(\bm{x}))
\end{matrix}
\end{gather*}
over the variables $q_c$ and $l_c$. We note that we define:
\begin{equation}
	(q \oplus d)_j = \max\limits_{c: j - c \in \mydeg\left(d(\bm{x})\right)} \left(q_c + d_{j-c} \right)
\end{equation}
as the \textit{max-plus convolution} of the coefficient sequences $d_i$ and $q_c$ of $d(\bm{x})$ and $q(\bm{x})$ respectively. This max-plus convolution also gives us the coefficients of $q(\bm{x})+d(\bm{x})$, similar to how the coefficients of regular polynomial products are computed via a regular convolution. It is obvious that the above problem is a GGP, since the minimization target and both types of inequality constraints are generalized posynomials of the variables $l_c$ and $q_c$.

\begin{theorem}
	The solution to the above GGP problem is equal to the polynomial $q(\bm{x})$ returned by the previous algorithm.
\end{theorem}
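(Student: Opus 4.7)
The plan is to show that the GGP optimum recovers the same coefficients $q_c$ as the algorithm. First, I would eliminate the auxiliary variables $l_c$: the constraint $l_c q_c^{-1}\leq 1$ enforces $l_c \leq q_c$, and the objective $\sum_c l_c^{-1}$ is strictly decreasing in each $l_c$, so any optimum has $l_c = q_c$. The GGP thus amounts to \emph{maximizing} each $q_c$ jointly, subject to the coefficient constraints $(q \oplus d)_j \leq p_j$ for $j \in \mydeg(p(\bm{x}))$.

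Next, I would expand the max-plus convolution. The inequality $(q \oplus d)_j \leq p_j$ for every $j \in \mydeg(p(\bm{x}))$ is equivalent to the family of linear inequalities $q_c + d_i \leq p_{c+i}$ ranging over $c \in C$ and $i \in \mydeg(d(\bm{x}))$. (The definition of $C$ gives $\bm{c}+\Newt(d) \subseteq \Newt(p)$, and the standing assumption that $p$ carries a term at every lattice point of $\Newt(p)$ ensures $c+i \in \mydeg(p(\bm{x}))$, so each constraint is well defined.) These inequalities decouple cleanly across $c$, so the joint maximization reduces to independent univariate ones, giving
\begin{equation*}
q_c^\star = \min_{i \in \mydeg(d(\bm{x}))}\bigl(p_{c+i} - d_i\bigr), \quad c \in C.
\end{equation*}

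It remains to identify $q_c^\star$ with the algorithm's value $q_c = \max\{q \in \reals : p(\bm{x}) \geq q + \bm{c}^T\bm{x} + d(\bm{x}),\ \forall \bm{x} \in \reals^d\}$. For this I would invoke Legendre--Fenchel duality for piecewise affine convex functions: for any tropical polynomial $f$, the height $U_f(\bm{j})$ of the upper hull of $\ENewt(f)$ at $\bm{j} \in \Newt(f)$ equals $\inf_{\bm{x}}(f(\bm{x}) - \bm{j}^T\bm{x})$. Pointwise domination $f \geq g$ therefore forces $U_f(\bm{j}) \geq U_g(\bm{j})$ on the common support, and the standing normalization places every coefficient of $p$ and of $d$ (and hence of the shifted divisor $q + \bm{c}^T\bm{x} + d(\bm{x})$) directly on the corresponding upper hull. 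Consequently, functional domination $p(\bm{x}) \geq q + \bm{c}^T\bm{x} + d(\bm{x})$ is equivalent to coefficient-wise domination $q + d_i \leq p_{c+i}$ for all $i \in \mydeg(d(\bm{x}))$. Taking the largest admissible $q$ yields exactly $q_c^\star$, and assembling across $c \in C$ recovers the same tropical polynomial $q(\bm{x}) = \max_{c\in C}(q_c + \bm{c}^T\bm{x})$ produced by the algorithm.

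The delicate step is this last equivalence between functional and coefficient-wise domination. The $(\Leftarrow)$ direction is immediate from the max representation of tropical polynomials, whereas $(\Rightarrow)$ would fail for polynomials carrying coefficients strictly below their upper hull, where slack between the coefficient and the upper envelope leaves the functional inequality intact even when the coefficient inequality is violated. It is precisely the normalization stated just before the division algorithm that rules out such slack and thereby aligns the GGP feasibility region with the per-shift constraint used by the algorithm; once that equivalence is in hand, the remainder of the argument is bookkeeping.
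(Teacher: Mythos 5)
Your argument is correct and follows essentially the same route as the paper's: eliminate the $l_c$ (at the optimum $l_c = q_c$, since the objective is strictly decreasing in each $l_c$), observe that the separable decreasing objective then pushes each $q_c$ to the upper bound imposed by the decoupled coefficient constraints, and identify those bounds with the algorithm's $q_c$. The paper merely asserts the equivalence between the constraints $(q \oplus d)_j p_j^{-1} \leq 1$ and the functional domination $p(\bm{x}) \geq q_c + \bm{c}^T\bm{x} + d(\bm{x})$, so your Fenchel-duality justification of that step --- and your observation that it hinges on the standing upper-hull normalization of the coefficients --- is a tightening of the same argument rather than a departure from it.
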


\begin{proof}
	The second type of constraints is equivalent to the condition that the shifted version of $q(\bm{x})$ has an Extended Newton Polytope below that of $p(\bm{x})$. The first type serves to bound the values $q_c$ from below, therefore forcing them to increase, since the goal decreases as that bound becomes higher.
	
	We can see that, due to the second type of inequality constraint imposing an upper limit on the values of $q_c$, this also implies that there is an upper limit in the values of $l_c$, due to the first type of constraints. Given that the minimization goal is strictly decreasing for all $l_c$, the minimum is attained precisely when all variables attain their upper bound, as closely as possible. Therefore, the polynomial $q(\bm{x})$ constructed by the coefficients $q_c$ is maximal, thus exactly the one given from the algorithm that has been already presented.
\end{proof}

\section{Application in Neural Networks}\label{sec:appl}


In what follows, we shall consider that the networks studied contain one hidden layer with ReLU activations, and an output layer with one neuron, with sigmoid activation. Such a network is suited to solving binary classification problems, and, as demonstrated in \cite{ZNL18}, the function to which it corresponds is constructed by the difference of two tropical polynomials, each of which has a zonotope as its Newton Polytope (also seen in \cite{ChMa18}).

Indeed, the output of each of the $n_1$ neurons of the hidden layer, which has a weight matrix $W^1 = [w_{ij}^1]$ and corresponding bias vector $b^1$, is equal to:
\begin{equation}
	y_i = \max \left(\sum_{j=1}^{n_1} w_{ij}^1 x_j + b^1 , 0\right)
\end{equation}
therefore a tropical polynomial (with possibly negative degrees). With this in mind, the value calculated by the output neuron (before the sigmoid activation is applied), with weight vector $w^2$ and bias $b^2$, is the following:
\begin{align}\label{eq:nn_zonotope}
	\sum_{i = 1}^{n_1} w^2_{i} y_i &= \sum_{i = 1}^{n_1} w^2_{i} \max \left(\sum_{j=1}^{d}w^1_{ij} x_j + b^1_{i} , 0 \right) + b^2 \nonumber \\
	&= \sum_{i = 1}^{n_1} w^2_{+,i} \max \left(\sum_{j=1}^{d}w^1_{ij} x_j + b^1_{i} , 0 \right) - \sum_{i = 1}^{n_1} w^2_{-,i} \max \left(\sum_{j=1}^{d}w^1_{ij} x_j + b^1_{i} , 0 \right) + b^2
\end{align}
where, $w^2 = w^2_+ - w^2_-$ the splitting of the weights of the output layer into two vectors with non-negative entries. Therefore, the output function consists of two zonotopes (sum of rescaled and shifted line segments, corresponding to the neurons in the hidden layer). It should be noted that the bias of the output neuron can be ignored during the application of the division algorithm, given that the approximation is the same, if it is instead added to the algorithm's result.

As noted by \cite{ZNL18}, the above formulation of neural networks as tropical rational functions can be extended even to rational degrees (since it is possible to convert them to integers, by multiplying with a large enough factor, and rescaling the inputs to compensate). Thus, in the following, we shall ignore whether the weights of the network are integers, since we can treat every network represented with floating point arithmetic as if it had integer degrees (as is also the case in \cite{ChMa18}).

\subsection{Divisibility of Neural Network Tropical Polynomials}

The above formulation of the networks studied leads us directly to the following:

\begin{theorem}
	Each of the two tropical polynomials which construct the function of our neural network can be divided exactly by the tropical polynomial corresponding to any of the line segments from which it is constructed.
\end{theorem}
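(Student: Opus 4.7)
My plan is to produce an explicit exact factorization of the tropical polynomial and then invoke the maximality theorem already proved. Consider the ``positive'' component of the output, written in classical arithmetic,
\[
P(\bm{x}) = \sum_{i=1}^{n_1} w^2_{+,i}\, d_i(\bm{x}), \qquad d_i(\bm{x}) := \max\!\Bigl(\sum_{j=1}^{d} w^1_{ij}\, x_j + b^1_i,\; 0\Bigr),
\]
the ``negative'' component $\sum_i w^2_{-,i}\, d_i$ being handled symmetrically. Each $d_i$ is a tropical polynomial whose Newton polytope is precisely the line segment attached to hidden neuron $i$. Since classical addition corresponds to tropical multiplication, $P$ is the tropical product $\bigotimes_i d_i^{\otimes w^2_{+,i}}$, so $\Newt(P)$ is the Minkowski sum $\sum_i w^2_{+,i}\,\Newt(d_i)$, which recovers the zonotope structure of (\ref{eq:nn_zonotope}).

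Next, I would fix a line segment $d_k$ that actually appears in $P$, i.e.\ with $w^2_{+,k} > 0$. By the integer-rescaling argument recalled in the excerpt, one may assume without loss of generality that every $w^2_{+,i}$ is a non-negative integer, so that $w^2_{+,k} \geq 1$. Peeling off a single copy of $d_k$ in classical arithmetic then gives
\[
P(\bm{x}) = d_k(\bm{x}) + q(\bm{x}), \qquad q(\bm{x}) := (w^2_{+,k}-1)\, d_k(\bm{x}) + \sum_{i\neq k} w^2_{+,i}\, d_i(\bm{x}).
\]
Every coefficient of $q$ is a non-negative integer, so $q$ is itself a tropical product of the $d_i$'s and hence a tropical polynomial; geometrically, $\Newt(q)+\Newt(d_k)=\Newt(P)$ as Minkowski sums, and the upper hulls of the extended Newton polytopes agree by construction.

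Finally, I would apply the single-divisor algorithm to $P$ and $d_k$. The identity $P = q + d_k$ saturates (\ref{eqn:ineq_1}) with $r\equiv-\infty$, so by the maximality theorem this $q$ is, as a function, the algorithm's quotient; moreover, every coefficient vector $\bm{a}_j$ of $P$ lies in a shift $\bm{c} + \Newt(d_k) \subseteq \Newt(P)$ coming from the Minkowski decomposition $\Newt(q)+\Newt(d_k)=\Newt(P)$, so the remainder set $T$ is empty. The main obstacle I anticipate is not the existence of this factorization---which is essentially forced by the zonotopal structure---but the bookkeeping around the integer-rescaling step and the standing convention that every interior lattice point of a Newton polytope carries a term on its upper hull: one has to verify that after rescaling, the algorithm's integer shift set $C$ is rich enough to realise each translate of $\Newt(d_k)$ appearing in the Minkowski decomposition of $\Newt(P)$, so that the algorithm's output $\max_{\bm{c}\in C}(q_c + \bm{c}^T\bm{x})$ agrees as a function with the explicit $q$ constructed above.
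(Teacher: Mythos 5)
Your proposal is correct and follows essentially the same route as the paper's own (much terser) proof: the paper likewise argues that the decomposition of the network output as a classical sum of the rescaled line-segment polynomials in (\ref{eq:nn_zonotope}) gives an exact tropical factorization, and that the algorithm's maximality then forces its quotient to realise it. Your version merely spells out the peeled-off factor $P = d_k + q$, the emptiness of the remainder set, and the integer-rescaling caveat that the paper leaves implicit.
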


\begin{proof}
	This follows immediately from the formulation of the neural network, as the difference of two sums in (\ref{eq:nn_zonotope}), and the fact that our algorithm always returns the greater possible tropical polynomial that has the form of the expected result.
\end{proof}

\noindent This observation further indicates the relationship of the operation we introduced, with the building blocks of a neural network.

\subsection{Approximation with Loss Minimization}

Let us now assume that we want to approximate the Newton Polytope of a neural network with the architecture described above, which has been trained to minimize a loss function (such as the cross entropy loss, commonly used in binary classification problems). The value of this function can be thought of as $L(u)$, where $u$ is a linear term corresponding to the output of the network (without the sigmoid activation). If the weights are perturbed, the first layer will lead to a different linear term $u'$ corresponding to this particular input. Here we can make the assumption that the network was properly trained, in the sense that the weights correspond to a local minimum for the loss function. Hence, a small disturbance in these weights is expected to strictly increase this loss function, when computed over the whole dataset.

Stemming directly from the above, it is evident that the increase in the loss function caused by this approximation is minimized, when $u' \approx u$. In other works (such as \cite{Luo_2017_ICCV}, \cite{He+17}), this is among a list of different criteria by which irrelevant neurons are pruned. Given that these terms correspond to vertices in the polytope related to the network, it is also possible to regard this pruning criterion, in the context of tropical polynomial division.

In particular, we seek to adapt the algorithm already introduced, so that the division by the tropical polynomial $d(\bm{x})$ minimizes the sum of distances between the vertices activated by each input and their approximation (which results in the above increase in the loss function being minimized). This can be formulated as the below optimization problem, where $p(\bm{x})$ is the tropical polynomial corresponding to either the positive or the negative part of the network, and $D$ a given dataset:

\begin{equation}\label{goal:orig}
	\mathrm{minimize} \sum_{\bm{x} \in D} |q(\bm{x}) + d(\bm{x}) - p(\bm{x})|^2
\end{equation}

This optimization problem is, in fact, a special case of the max-linear fitting problem, discussed in detail in \cite{MaBo09}, where the function to approximate (that is, $p(\bm{x}) - d(\bm{x})$) is itself convex and piecewise linear. Thus, given a divisor polynomial, the algorithm described in the relevant work can be used (where the data points are alternatively assigned to terms of $q(\bm{x})$, and then each term of $q(\bm{x})$ is fit via least-squares), to calculate the quotient which provides the exact result to the above.

It should be noted that this distance between feature maps is different in principle from the one presented in \cite{He+17}. Indeed, here we attempt to approximate the output of a network using a given divisor polynomial, instead of minimizing the goal without the structure provided by tropical polynomial division.

\subsection{Direct Approximation of the Newton Polytope}

In this section, we shall provide an alternative approximation, which instead of seeking to minimize the difference in the activations caused by each input, attempts to directly match the vertices of the smaller network with those of the original. To that end, we shall seek to solve the following optimization problem:

\begin{equation}\label{goal:0}
	\mathrm{minimize} ~\sum_{j \in \mydeg\left(p(\bm{x})\right)} \left| (q \oplus d)_j - p_j \right|^2
\end{equation}
where, once again, $(q \oplus d)_j$ is the max-plus convolution of the coefficient sequences $d_i, q_c$. Thus, we want to minimize the difference in the coefficients of $q(\bm{x})+d(\bm{x}), ~p(\bm{x})$. The only variables are, again, the values $q_c$.

This is not a direct optimization of the criterion of distance between activated vertices (given that the ones corresponding to $q(\bm{x}) + d(\bm{x})$ change during the course of the algorithm). Rather, it is ideal only in the special case where the activated vertices correspond to the same degrees, in both $p(\bm{x})$ and $q(\bm{x}) + d(\bm{x})$ (a situation which might be more likely the smaller the divisor, given that the approximation will tend to have more vertices in the upper hull). However, it is closer in nature to the division algorithm already presented.

We shall now attempt to formulate the above problem as a GGP. First of all, we note the following property for terms of this particular form:
\begin{equation}
	\frac{\partial}{\partial y_i} \sum_{j} \left(y_j - c_j\right)^2 = 2\left(y_i - c_i\right)
\end{equation}
and that:
\begin{equation}
	\frac{\partial}{\partial y_i} \frac{\sum_{j} \left( y_j^2 + c_j^2 \right)}{\sum_{j} 2 c_j y_j} = \frac{2 y_i - 2 c_i }{\left(\sum_{j} 2 c_j y_j\right)^2}
\end{equation}
so the functions we differentiate attain their maxima and minima in the same values of $y$ (since their partial derivatives vanish at the same positions). Therefore, goal (\ref{goal:0}) will attain its minimum for the same $q(\bm{x})+d(\bm{x})$, as the following:
\begin{equation}\label{goal:1}
	\mathrm{minimize} ~\frac{\sum_{j \in \mydeg\left(p(\bm{x})\right)}  \left((q \oplus d)_j\right)^2 + p_j^2}{\sum_{ij \in \mydeg\left(p(\bm{x})\right)}  2 p_j \left((q \oplus d)_j\right)} 
\end{equation}
This is still not valid for a GGP, due to the fact that the denominator of the fractional goal is not a monomial. However, we shall attempt to approximate the solution, using the following method.

First of all, we introduce variables $l_c > 0$, such that:
\begin{equation}
	l_c \leq q_c \Rightarrow l_c q_c^{-1} \leq 0 , \forall c \in C
\end{equation}
The above are valid GGP constraints, and the following holds (assuming all $q_c$ and $d_i$ are positive, thus $(q \oplus d)_j \geq 0$):
\begin{gather}\label{eqn:lim_property}
	l_c \leq q_c < q_c + d_{j-c} \leq (q \oplus d)_j \leq \sum_{j \in \mydeg\left(p(\bm{x})\right)} (q \oplus d)_j \Rightarrow \\
	\frac{1}{\prod\limits_{c \in C : j-c \in \mydeg\left(d(\bm{x})\right)} l_c} > \frac{1}{\left(\sum_{j \in \mydeg\left(p(\bm{x})\right)} (q \oplus d)_j\right)^r} \nonumber
\end{gather}
where $r = |C|$. Moreover, we shall reintroduce the upper limits for the coefficients of $q(\bm{x}) + d(\bm{x})$, but this time we relax these constraints with factors $\xi_j$, that is:
\begin{equation}
	(q \oplus d)_j p_j^{-1} \xi_j^{-1} \leq 1 , ~ \forall j \in \mydeg\left(p(\bm{x})\right)
\end{equation}

With the above, we now formulate the GGP below, which attempts to approximate the solution to (\ref{goal:1}):
\begin{gather}\label{goal:2}
	\mathrm{minimize} ~\frac{\left(\sum_{j \in \mydeg\left(p(\bm{x})\right)}  \left((q \oplus d)_j\right)^2 + p_j^2\right)^r}{\prod_{c \in C : j-c \in \mydeg\left(d(\bm{x})\right)} l_c} + R \sum_{j \in \mydeg\left(p(\bm{x})\right)} \xi_j \\
	\begin{matrix}
	s.t & l_c q_c^{-1} \leq 0 , \forall c \in C \\
	~ & (q \oplus d)_j p_j^{-1} \xi_j^{-1} \leq 1 , ~ \forall j \in \mydeg\left(p(\bm{x})\right)
	\end{matrix} \nonumber
\end{gather}
where $R$ is a regularization factor. This is a valid GGP, due to both the goal and the constraints being posynomials (the values $p_j$ are always positive, due to the construction of the network).

It is evident that, due to (\ref{eqn:lim_property}), the value of the optimization goal in (\ref{goal:1}) is, when raised to $r > 0$ (which does not affect minimization), strictly lower than the first part of the goal of (\ref{goal:2}). The role of the variables $l_c$ is again to push $q_c$ as high as possible. However, it should be noted that the minima in (\ref{goal:1}) and (\ref{goal:2}) may differ, given that the solution to (\ref{goal:2}) may lead to the two goals being equal at the solution point, but the value produced may be suboptimal for (\ref{goal:1}). This is where the regularization comes into play, limiting the amount that the coefficients of the result can increase beyond those of $p(\bm{x})$, with the rigid constraints being fully reinstated if the value of $R$ is too large. While different values of $R$ may produce different solutions, a good one may be found simply by solving multiple instances of the problem.

The above problem is essentially an adaptation of our tropical polynomial division algorithm: the shifted versions of the divisor polynomial $d(\bm{x})$ are raised as much as possible (this time, up to $\xi_j p_j$), with the result being an approximation of the original tropical polynomial.

\subsection{Practical Application}

It is evident that the problems demonstrated above might be difficult to solve in the general case, since not only do the output variables of the algorithms we have presented scale exponentially with increasing dimension $d$, but some also require prior knowledge of the vertices and the faces of the Newton Polytope (finding only the former of which has the same scaling with $d$, \cite{Chazelle:1993:OCH:2805881.2830424}). However, in the case of approximating a neural network of the above form, we propose the following tractable method, of finding a suboptimal solution to minimizing either (\ref{goal:orig}) or (\ref{goal:0}).

\textbf{Neural Network Approximation} Let $p_{+}$, $p_{-}$ be the positive and negative parts of the network we want to approximate, with a smaller one consisting of $f\%$ neurons in the hidden layer. The algorithm used for this approximation is as follows:
	\begin{itemize}
		\item Phase 1:
		\begin{itemize}
			\item Randomly sample a subset $X$ of the training set.
			\item For each $x^i \in X$ calculate the corresponding vertex $u^i_{+}$, $u^i_{-}$ in the Extended Newton Polytope of each of $p_{+}$, $p_{-}$, activated by this input, that is $(u^i_{+})^T x_i = p_{+}(x_i)$, similarly for $u^i_{-}$.
			
			\item Sort each set of vertices in descending order, based on the number of their appearances.
			
			\item For the set of $u^i_{+}$, set the first neuron weight $w_1$ equal to the first vertex in the sorted list.
			
			\item For the j\textsuperscript{th} vertex $u^j_{+}$ in the list, up to $f\%$ of the neurons in the positive part, randomly pick one of the already assigned weights (let this be $w_k$), and set:
			\begin{gather*}
				w_k \leftarrow w_k - u^j_{+} \\
				w_j \leftarrow u^j_{+}
			\end{gather*}
			
			\item Repeat for the negative part of the network.
			
			\item Gather the weights created ($f\%$ of the original, in total) and use them as the first layer of the network, while assigning weights $\pm1$ in the output layer, for the positive and negative parts, respectively.
		\end{itemize}
		
		\item Phase 2:
		\begin{itemize}
			\item For each element of the set $X$, calculate the value corresponding to the activation of the output neuron in the original network, minus the same activation in the approximation network.
			
			\item Set a bias in the output neuron, equal to the mean of these values, plus the original bias of the output neuron.
		\end{itemize}
	\end{itemize}

In the above algorithm, the method presented in phase 1 allows us to create a network which matches perfectly the most important vertices of the polytope, therefore giving us a good polynomial to use as divisor. Moreover, this process also seeks to identify correlated neurons, which tend to fire together, intuitively leading to some level of regularization of the network.

As for the idea behind phase 2, we can show the following:
\begin{theorem}
	If $p(\bm{x})$ matches the degrees of $d(\bm{x})$, then $q(\bm{x}) = q_{\bm{0}}$, and the goal (\ref{goal:orig}), can be minimized if we set:
	\begin{equation}
		q_{\bm{0}} = \frac{1}{|X|} \sum_{x \in X} \left(p(\bm{x}) - d(\bm{x})\right)
	\end{equation}
\end{theorem}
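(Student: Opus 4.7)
The plan is that, under the stated hypothesis, the optimization (\ref{goal:orig}) collapses to a one-dimensional least-squares problem whose minimizer is the sample mean of the residuals. First I would unpack the hypothesis: saying $p(\bm{x})$ matches the degrees of $d(\bm{x})$ means $\mydeg(p) = \mydeg(d)$, and hence $\Newt(p) = \Newt(d)$. Since the set $C$ in the division algorithm consists of the integer shifts $\bm{c}$ for which $\Newt(\bm{c}^T\bm{x}+d(\bm{x})) = \bm{c} + \Newt(d) \subseteq \Newt(p)$, matching Newton polytopes force $\bm{c} = \bm{0}$, so $C = \{\bm{0}\}$. Consequently the quotient produced by the algorithm is $q(\bm{x}) = q_{\bm{0}} + \bm{0}^T\bm{x} = q_{\bm{0}}$, i.e.\ a constant, which establishes the first half of the claim.

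Next I would substitute this constant quotient into (\ref{goal:orig}) to obtain the univariate objective
\begin{equation*}
F(q_{\bm{0}}) \;=\; \sum_{\bm{x} \in X} \bigl(q_{\bm{0}} + d(\bm{x}) - p(\bm{x})\bigr)^2.
\end{equation*}
This is a strictly convex quadratic in the single scalar $q_{\bm{0}}$ with second derivative $2|X| > 0$, so it has a unique global minimizer, obtained by setting $F'(q_{\bm{0}}) = 0$. The first-order condition reads
\begin{equation*}
2 \sum_{\bm{x} \in X} \bigl(q_{\bm{0}} + d(\bm{x}) - p(\bm{x})\bigr) \;=\; 0,
\end{equation*}
and solving for $q_{\bm{0}}$ gives precisely $q_{\bm{0}} = \frac{1}{|X|}\sum_{\bm{x}\in X}(p(\bm{x}) - d(\bm{x}))$, as required.

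The only genuinely non-trivial step is the reduction $q(\bm{x}) = q_{\bm{0}}$, which depends on reading the hypothesis as an equality of Newton polytopes and on the definition of $C$. Once that reduction is in hand, the remainder is the elementary observation that the sample mean minimizes a sum of squared deviations, so no real obstacle arises. I would also briefly remark that this minimizer coincides with the value added as the output bias in Phase~2 of the Neural Network Approximation algorithm, which motivates why the theorem is stated: in the degenerate case where the approximating divisor already has the same support as $p(\bm{x})$, simply shifting by the mean residual is optimal.
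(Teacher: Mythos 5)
Your proposal is correct and follows essentially the same route as the paper: reduce the objective to a one-dimensional sum of squared deviations and observe that the mean is its minimizer. In fact you supply a detail the paper leaves implicit, namely that matching degrees forces $C=\{\bm{0}\}$ and hence a constant quotient $q(\bm{x})=q_{\bm{0}}$; the paper simply writes ``assuming the situation described'' and cites the well-known fact about the mean rather than carrying out the first-order condition.
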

\begin{proof}
	Assuming the situation described, the goal (\ref{goal:orig}) takes the following form:
	\begin{equation}
		\sum_{\bm{x} \in X} \left| q_{\bm{0}} + d(\bm{x}) - p(\bm{x}) \right|^2 = \sum_{\bm{x} \in X}  \left| q_{\bm{0}} - \left(p(\bm{x}) - d(\bm{x})\right) \right|^2
	\end{equation}
 	This is equivalent to the sum of squared distances from a set of points in the real line, for which the minimum is known to be attained precisely at the mean of these points.
\end{proof}
\noindent Due to the above, in phase 2 of the algorithm, we also add the appropriate bias, in order to compensate for the vertices ignored during phase 1.

Regarding the time complexity of the algorithm, it is obvious that the number of steps for phase 1 scales as $|X| \mathrm{log}|X|+N^2$ (where $N$ the number of remaining neurons), while the number of steps required for phase 2 is again linear in $|X|$. Therefore, assuming a reasonable amount of time necessary for the calculation of the network outputs for these samples, the method presented is tractable. It is also likely faster than training a new network, a process which requires at least $|X|$ updates for each of the $N$ neurons.

\section{Experiments}\label{sec:exper}

We shall now apply the method discussed previously to neural networks already trained in a binary classification problem. In particular, we shall use the following datasets:
\begin{itemize}
	\item The IMDB Movie Review dataset \cite{maas-EtAl:2011:ACL-HLT2011}, which contains 50000 movie reviews, half of them being in the training set and the other half in the test set. We shall utilize the preprocessing proposed in the above (keeping 5000 words, except the 50 most common ones), while also truncating or padding each sequence, so that all have a fixed length of 200.
	
	\item The MNIST dataset \cite{LeBo18}, where at first only the pairs of digits 4 - 9  and 3 - 6 are considered, for a binary classification problem (even in a highly accurate model like \cite{Oy+18}, these classes can still be confused). Afterwards, the entirety of the dataset is split into even and odd numbers, for a more general binary classification problem.

\end{itemize}

For our experiments, we made use of the Python library Keras \cite{chollet2015keras}. The models and the basic architectures provided were used in order to demonstrate the results of the application of our minimization method. The reported accuracies are taken as the average between the different runs of each experiment.

\subsection{IMDB Movie Review - Original Method}

In this experiment, we examined 3 different models, trained in the IMDB dataset, using an 80\%-20\% train-validation split, with the validation loss being used to stop the training of the models, so as to avoid overfitting. All three models learned an embedding of the words of the reviews in a 50-dimensional space. The difference lies in the representation used afterwards:
\begin{itemize}
	\item The first model simply fed all available points in a fully connected network, with one hidden layer containing 1024 neurons.
	
	\item The second and the third model employed a bidirectional LSTM with 32 units and a one-dimensional CNN (with two convolutional layers of 8 and 4 units, with ReLU activations, kernels of 3 pixels and max-pooling of 5 pixels), respectively, to create a smaller representation to use as input to a fully connected network, with 256 neurons in the hidden layer.
\end{itemize}

It should be noted that we opt not to use pretrained word embeddings (such as those provided by \cite{pennington2014glove}), given that our models are not complex enough to take advantage of the benefits of using these embeddings. Moreover, the above architectures are not optimal, but they demonstrate the value of our method.

Our algorithm was applied to the above models, using a subset of 2000 points from the training set. Various percentages of the neurons of the hidden fully connected layer were kept, and the relevant results are shown in Table \ref{table:1}, where the accuracy of each model in the test set is reported.
\begin{table}[h]
	\centering
	\begin{tabularx}{\textwidth} {
			| >{\centering\arraybackslash}X
			| >{\centering\arraybackslash}X
			| >{\centering\arraybackslash}X
			| >{\centering\arraybackslash}X
			| >{\centering\arraybackslash}X
			| >{\centering\arraybackslash}X
			| >{\centering\arraybackslash}X |		
		}
		\hline
		Percentage of Neurons Kept & \multicolumn{2}{|c|}{FC (1024)} & \multicolumn{2}{|c|}{LSTM + FC (256)} & \multicolumn{2}{|c|}{1D CNN + FC (256)} \\
		\hline
		100\% (Original) & \multicolumn{2}{|c|}{84.056\%} & \multicolumn{2}{|c|}{85.006\%} & \multicolumn{2}{|c|}{83.927\%} \\
		\hline
		~ & Reduced & Fresh Training & Reduced & Fresh Training & Reduced & Fresh Training \\
		\hline
		90\% & 84.299\% & 83.647\% & 85.190\% & 84.282\% & 84.095\% & 84.296\% \\
		\hline
		75\% & 84.263\% & 83.668\% & 85.108\% & 84.427\% & 84.059\% & 84.212\% \\
		\hline
		50\% & 84.314\% & 83.808\% & 85.029\%& 84.430\% & 84.006\% & 83.615\% \\
		\hline
		25\% & 84.366\% & 84.108\% & 85.092\% & 84.039\% & 83.930\% & 83.662\% \\
		\hline
		10\% & 84.333\% & 84.700\% & 85.126\% & 84.914\% & 84.042\% & 84.300\% \\
		\hline		
		5\% & 84.362\% & 84.744\% & 85.079\% & 83.919\% & 84.118\% & 84.675\% \\
		\hline		
		2\% & 84.314\% & 84.500\% & 84.878\% & 84.081\% & 83.922\% & 83.643\% \\
		\hline
		1\% & 84.206\% & 85.180\% & 85.017\% & 82.566\% & 83.915\% & 81.855\% \\
		\hline
		0.5\% & 83.860\% & 85.564\% & - & - & - & - \\
		\hline
	\end{tabularx}
	\caption{Average accuracy on test set (IMDB).}
	\label{table:1}
\end{table}
These results demonstrate the value of our method, given that using the full algorithm leads to a drop in performance of less than 0.5\% at most, an arguably negligible cost when considering the fact that it allows us to reduce the size of the fully connected part of the network by a factor of up to 100. In some cases, it even improves upon the full network.

We also report the results of applying minimization by approximating not the activations, but the biases of the activated neurons, therefore attempting to minimize goal (\ref{goal:0}) over the dataset. The results are in Table \ref{table:2}.
\begin{table}[h]
	\centering
	\begin{tabularx}{0.8\textwidth} {
			| >{\centering\arraybackslash}X
			| >{\centering\arraybackslash}X
			| >{\centering\arraybackslash}X
			| >{\centering\arraybackslash}X |		
		}
		\hline
		Percentage of Neurons Kept & FC (1024) Accuracy (\%) & LSTM + FC (256) Accuracy (\%) & 1D CNN + FC (256) Accuracy (\%) \\
		\hline
		100\% (Original) & 84.056 & 85.006 & 83.927 \\
		\hline
		90\% & 83.804 & 84.988& 83.326\\
		\hline
		75\% & 84.413& 84.483& 82.812\\
		\hline
		50\% & 84.208& 84.097& 82.763\\
		\hline
		25\% & 84.140& 84.774& 83.186\\
		\hline
		10\% & 84.023& 83.159 & 83.469\\
		\hline
		5\% & 82.723& 84.883& 83.078\\
		\hline
		2\% & 83.684& 85.014& 83.928\\
		\hline
		1\% & 79.551& 84.997& 83.922\\
		\hline
		0.5\% & 83.971& -&- \\
		\hline	\end{tabularx}
	\caption{Average accuracy on test set, goal (\ref{goal:0}) (IMDB).}
	\label{table:2}
\end{table}
This method has worse results than the one used previously, likely because the approximation uses one large divisor (so the vertices of the polytope of the approximation may not match those of the original). Nevertheless, in the case of the first and the second model, they can be deemed satisfactory, since they lead to a drop of 4.5\% in only a single case.

In Table \ref{table:1}, the accuracies of models with less neurons, trained from scratch, are also reported. The results for the freshly trained models are better in the case of the fully connected model, possibly due to the number of parameters making them able to approximate the correct result, no matter the number of neurons. However, in the case of the other two models, the accuracy achieved by the freshly trained model is lower in general, and especially when only a few neurons of the hidden layer are kept. Given that the input to the fully connected layer is very small in these models, this might mean that the two classes are entangled in a convoluted fashion in smaller dimensions. Therefore, our method is useful in the case where the number of features is small, since it allows us to get information from a higher dimensional projection, before minimizing the network.

In theory, a freshly trained model should be able to achieve at least as good results as our own, given a good optimization algorithm. However, the results from Table \ref{table:time} show that our method is significantly faster than training all different models. Therefore, our method is preferable, over training several different models, while also obtaining adequate results.

\begin{table}[h]
	\centering
	\begin{tabularx}{0.8\textwidth} {
			| >{\centering\arraybackslash}X
			| >{\centering\arraybackslash}X
			| >{\centering\arraybackslash}X |		
		}
		\hline
		Model & Our method runtime - full algorithm (sec) & Time required to train all models (sec)\\
		\hline
		FC & 86.9 & 151.9\\
		\hline
		LSTM+FC & 139.7& 907.0\\
		\hline
		CNN+FC & 18.3& 149.2\\
		\hline	\end{tabularx}
	\caption{Runtime comparison (IMDB).}
	\label{table:time}
\end{table}

It should be noted that, in the case of the CNN, using linear instead of ReLU activations in the convolutional networks led to a significant accuracy drop, both with the full network and after minimization. This is possibly due to the fact that using these activations prevent the network from attaining a minimum for the loss, therefore close approximation of the network values may not guarantee the best results possible.

\subsection{IMDB Movie Review - Iterative Method}

In an attempt to enhance the results demonstrated in Table \ref{table:1}, we create an iterative version of our algorithm, where the minimization is split into distinct steps. In each step, the amount of neurons in the hidden layer is halved, using the method to minimize goal (\ref{goal:orig}) as before, but the two fully connected layers are retrained for 10 epochs after each iteration, in order to compensate for the loss of accuracy caused by the reduction.


\begin{table}[h]
	\centering
	\begin{tabularx}{\linewidth} {
			| >{\centering\arraybackslash}X
			| >{\centering\arraybackslash}X
			| >{\centering\arraybackslash}X |		
		}
		\hline
		Neurons Kept & FC (1024) & CNN+FC (256)  \\
		\hline
		100\% & 84.056\% & 83.927\% \\
		\hline
		50\% - Iter. 1 & 84.334\% & 84.080\% \\
		\hline
		25\% - Iter. 2 & 84.379\% & 84.094\% \\
		\hline
		12.5\% - Iter. 3 & 84.347\% & 84.105\% \\
		\hline
		6.25\% - Iter. 4 & 84.272\% & 84.131\% \\
		\hline
		3.1\% - Iter. 5 & 84.235\% & 84.049\% \\
		\hline
		1.6\% - Iter. 6 & 84.299\% & 84.054\% \\
		\hline
		0.8\% - Iter. 7 & 84.192\% & - \\
		\hline
	\end{tabularx}
	\caption{Average accuracy on test set, iterative method with retraining (IMDB).}
	\label{table:4}
\end{table}

The results can be seen in Table \ref{table:4}, for the first and the third model (the second appears to have no room for improvement). This is, in general, a slight improvement over the results of Table \ref{table:1}, since retraining the network allows it to compensate for any mistakes made in its approximation. It is possible that better results might have been obtained with a more careful retraining step, in order to better approximate a local minimum. Moreover, the results for the convolutional model are better than those of the freshly trained networks, since we also take advantage of a good representation learned when using the large model. This is not the case for the first model, likely for similar reasons as above.

\subsection{MNIST Dataset - Pairs 4/9, 3/6}

In this experiment, we filter both the training and test sets of the MNIST dataset, in order to keep only the samples corresponding to the pairs considered. The model examined in this case consists of two convolutional layers, of the same number of units as before, as well as two fully connected layers, with the hidden layer containing 1000 neurons. This way, we can again learn a small representation for our data. The method used is, again, the one which minimizes goal (\ref{goal:orig}). The results can be found in Table \ref{table:5}.

\begin{table}[h]
	\centering
	\begin{tabularx}{\linewidth} {
			| >{\centering\arraybackslash}X
			| >{\centering\arraybackslash}X
			| >{\centering\arraybackslash}X
			| >{\centering\arraybackslash}X
			| >{\centering\arraybackslash}X |		
		}
		\hline
		Neurons Kept &Dig. 3-5, Reduced &Dig. 3-5, Fresh & Dig. 4-9, Reduced & Dig. 4-9, Fresh \\
		\hline
		100\% &99.180 & - &99.046 & -\\
		\hline
		75\% &99.138 &99.380 & 99.036& 99.186\\
		\hline
		50\% &99.106 &99.127 & 99.046& 99.056\\
		\hline
		25\% &99.117 &99.138 & 98.985& 98.905\\
		\hline
		10\% &99.106 &99.190 & 99.005& 99.166\\
		\hline
		2\% &99.117 &99.096 & 98.905& 99.066\\
		\hline
		1\% &99.180 &99.138 & 98.805& 98.945\\
		\hline
		0.5\% &99.180 &99.117 & 99.005& 99.026\\
		\hline
	\end{tabularx}
	\caption{Average accuracy on test set (MNIST).}
	\label{table:5}
\end{table}

It can be seen that the performance drop caused by our method is again small, representing a drop of less than 0.5\%, with similar reduction of network size. Moreover, the freshly trained models do exhibit similar behavior as before, albeit on a smaller scale. This is possibly caused by the ease of this problem, since we do not benefit much from learning a better representation by training on a larger network (it could be the case that the classes are easily distinguishable). However, our method still has the aforementioned benefits of allowing us to train larger models, which are more likely to converge, and obtain adequate results, without training a large number of networks.

\subsection{MNIST Dataset - Even/Odd}

Here, we use the entirety of the MNIST dataset for our experiment. The two classes considered are even and odd digits, with each sample being relabeled appropriately. The network used is the same as the previous experiment, with the only difference being the use of slightly larger kernels (of size 5, as opposed to 3). The results can be seen in Table \ref{table:evenodd}.

 \begin{table}[h]
 	\centering
 	\begin{tabularx}{\linewidth} {
 			| >{\centering\arraybackslash}X
 			| >{\centering\arraybackslash}X
 			| >{\centering\arraybackslash}X	|
 		}
 		\hline
 		Neurons Kept &Reduced &Fresh \\
 		\hline
 		100\% &98.156\% & -\\
 		\hline
 		75\% &98.062\% &98.182\% \\
 		\hline
 		50\% &97.914\% &98.370\% \\
 		\hline
 		25\% &98.030\% &98.196\% \\
 		\hline
 		10\% &97.994\% &98.014\% \\
 		\hline
 		2\% &98.030\% &98.392\% \\
 		\hline
 		1\% &98.034\% &97.986\% \\
 		\hline
 	\end{tabularx}
 	\caption{Average accuracy on test set (MNIST, even/odd).}
 	\label{table:evenodd}
 \end{table}

Once again, the approximation performed by our method leads to minimal loss of accuracy. Furthermore, the results are very close to those attained with the freshly trained models. Hence, the behavior is similar to that of the previous experiments. Of note, however, is the fact that the kernel of the convolutional layers was enlarged in this case. This was necessary due to the high variance of samples in each class (due to each corresponding to visually different digits). Thus, a better representation was needed, via a larger convolutional part, for the approximation to be acceptable.

\section{Conclusions and Future Work}\label{sec:conclusions}

In this work, we examined a framework for a form of tropical polynomial division, and made use of the intuition it provides in order to define a method to approximate two-layer fully connected networks trained for a binary classification problem. The results obtained demonstrate that the Newton Polytopes of the tropical polynomials corresponding to the network may provide a reliable way to approximate its results, due to them encoding the totality of the information contained in the network.

In the future, we hope to generalize this method, in order to apply it to multiclass problems, as well as more general architectures (such as convolutional neural networks), and deep models, possibly via its recursive application between layers. Moreover, further study of this method must be made, along with experimentation in a greater variety of datasets, in order to identify the extent of its applications, as well as further examination of this type of division of polynomials in the max-plus semiring, in order to identify more of its properties.

\newcommand{\noopsort}[1]{} \newcommand{\printfirst}[2]{#1}
  \newcommand{\singleletter}[1]{#1} \newcommand{\switchargs}[2]{#2#1}

\bibliographystyle{ieeetr}

\end{document}